%% file: main.tex
\documentclass{article}

\usepackage{arxiv}

\usepackage[utf8]{inputenc}
\usepackage[T1]{fontenc}
\usepackage{amsmath}
\usepackage{amsfonts}
\usepackage{amssymb}
\usepackage{amsthm}
\usepackage{mathtools}
\usepackage{booktabs}
\usepackage{graphicx}
\usepackage{subcaption}
\usepackage{algorithm}
\usepackage{algorithmic}
\usepackage{tikz}
\usepackage{pgfplots}
\usepackage{float}
\usepackage{xurl}
\usepackage[numbers,sort&compress]{natbib}
\usepackage[hidelinks]{hyperref}
\usepackage{microtype}

\pgfplotsset{compat=1.18}
\usetikzlibrary{arrows.meta,positioning,fit,calc}

\newtheorem{proposition}{Proposition}
\newtheorem{lemma}{Lemma}

\title{HS-FNO: History-Space Fourier Neural Operator for Non-Markovian Partial Differential Equations}

\author{Lennon J. Shikhman\thanks{Corresponding author. ORCID: \href{https://orcid.org/0009-0008-6030-7641}{0009-0008-6030-7641}.} \\
College of Computing, Georgia Institute of Technology \\
Atlanta, GA, United States \\
\texttt{lshikhman3@gatech.edu}}

\renewcommand{\headeright}{Preprint}
\renewcommand{\undertitle}{Preprint}
\renewcommand{\shorttitle}{HS-FNO for Non-Markovian PDEs}

\hypersetup{
pdftitle={HS-FNO: History-Space Fourier Neural Operator for Non-Markovian Partial Differential Equations},
pdfauthor={Lennon J. Shikhman},
pdfkeywords={delay partial differential equations, neural operators, Fourier neural operators, history spaces, non-Markovian dynamics, scientific machine learning}
}

\begin{document}
\maketitle

\begin{abstract}
Neural operators provide fast surrogate models for time-dependent partial differential equations, but their standard autoregressive use usually assumes that the instantaneous field \(u(t,\cdot)\) is a complete state. This assumption is false for delay equations, distributed-memory systems, and other non-Markovian dynamics: two trajectories may agree at time \(t\) and nevertheless have different futures because their histories differ. We introduce the History-Space Fourier Neural Operator (HS-FNO), a neural operator for delay and memory-driven PDEs formulated on the lifted state \(u_t(\theta,x)=u(t+\theta,x)\), \(\theta\in[-\tau,0]\). The method decomposes one step of evolution into an FNO predictor for the newly exposed future slice and an exact shift-append transport for the part of the history window already known from the previous state. This construction reduces the learned output dimension and enforces the natural history-space update. We test HS-FNO on five benchmark families covering delayed reaction--diffusion, spatial epidemiology, nonlocal neural-field dynamics, delayed waves, and distributed-memory closures. Across ten random seeds, HS-FNO attains the lowest aggregate one-step, history-space, and rollout errors among the baselines. The largest gain occurs in autoregressive prediction, where rollout error decreases from \(0.241\), \(0.188\), and \(0.185\) for current-state, lag-stack, and unconstrained history-to-history operators, respectively, to \(0.094\). The same model uses fewer parameters than unconstrained history prediction. These results indicate that representing the history segment as the state, rather than as ordinary extra channels, is an effective inductive bias for non-Markovian PDE surrogate modeling.
\end{abstract}

\keywords{delay partial differential equations \and neural operators \and Fourier neural operators \and history spaces \and non-Markovian dynamics \and scientific machine learning}

\input{sections/1_Introduction}
\input{sections/2_RelatedWork}
\input{sections/3_HSFNO}
\input{sections/4_Theory}
\input{sections/5_Experiments}
\input{sections/6_Results}
\input{sections/7_Discussion}
\input{sections/8_Conclusion}

\section*{Data and code availability}
The code and experiment artifacts for this study are available at \url{https://github.com/lennonshikhman/hs-fno}. The repository includes the implementation of HS-FNO, baseline models, benchmark-generation scripts, training and evaluation code, and figure-generation utilities needed to reproduce the reported experiments.

\section*{Acknowledgements}
The author gratefully acknowledges hardware support from Dell Technologies through the Dell Pro Precision division, which supported the computational experiments reported in this work.

\bibliographystyle{unsrtnat}
\bibliography{references}

\end{document}

%% file: sections/1_Introduction.tex
\section{Introduction}

Many scientific and engineered systems evolve according to both their present state and their past states. Delay differential equations (DDEs) and delay partial differential equations (DPDEs) capture this dependence by making the dynamics depend on a history segment rather than only on the instantaneous state \cite{hale1993functional,wu1996partialfunctional}. This structure arises in several computational settings. In population and biological dynamics, delayed reaction--diffusion equations model latency, maturation, and feedback, including threshold behavior in diffusive Nicholson-type equations \cite{yi2009threshold,wu1996partialfunctional}. In chemical kinetics, delayed variables compactly represent intermediate reaction pathways without resolving every species \cite{roussel1996use}. In controlled or engineered spatial systems, delays enter through boundary feedback, actuator latency, sensor latency, and communication effects, where even small nonlinear boundary delays require separate well-posedness analysis \cite{oliva1999reaction}. Memory also appears when unresolved variables are eliminated from multiscale systems: in coarse-grained turbulence and large eddy simulation, Mori--Zwanzig reductions induce nonlocal-in-time closure terms \cite{parish2017nonmarkov,parish2017dynamic}. In all of these cases, \(u(t,\cdot)\) alone may not determine the future. The history is part of the state.

Neural operators are now widely used to approximate solution maps of parametric PDEs, especially when repeated forward solves are needed. In a standard autoregressive formulation, one learns
\[
G_\theta:u(t,\cdot)\mapsto u(t+\Delta t,\cdot),
\]
possibly conditioned on parameters, forcing, or boundary data. This map is well defined for Markovian evolution equations. It is not well defined for delay PDEs unless the state is enlarged. Two trajectories can have identical current fields and different histories, and therefore different next states. A current-state neural operator is then asked to approximate a single-valued map even though the underlying dynamics are not single-valued on instantaneous fields. The problem may be partly hidden in one-step tests drawn from a narrow distribution, but it is exposed by autoregressive rollout, delay shifts, parameter shifts, and feedback regimes where old prediction errors re-enter the dynamics.

This paper studies neural-operator surrogate modeling after replacing the instantaneous state by the natural history state. For delay horizon \(\tau>0\) and spatial state space \(X\), define
\[
u_t(\theta,x)=u(t+\theta,x),\qquad \theta\in[-\tau,0].
\]
The evolution over one time step is then a map
\[
S_{\Delta t}:u_t\mapsto u_{t+\Delta t}
\]
on the history space. We introduce the History-Space Fourier Neural Operator (HS-FNO), which applies an FNO backbone to the lifted history field but does not learn the entire history-to-history map. Instead, HS-FNO predicts only the newly exposed future slice and advances the rest of the window by exact shift-append transport. Thus the learned component is restricted to the part of the update that is not already known from the previous history window.

The numerical question is whether this structural decomposition improves accuracy and stability relative to current-state, lag-stack, unconstrained history-to-history, and sequence baselines. We evaluate this question on five benchmark families: delayed reaction--diffusion, spatial epidemiology, nonlocal neural-field dynamics, delayed waves, and distributed-memory closures. The experiments use matched train/test splits, ten random seeds, rollout evaluation, held-out delay and parameter regimes, and resolution-transfer tests.

The contributions are as follows:
\begin{itemize}
    \item a formulation of delay-PDE surrogate modeling as operator learning on history spaces rather than instantaneous spatial fields;
    \item HS-FNO, a History-Space Fourier Neural Operator that combines a learned future-slice predictor with exact shift-append history transport;
    \item a numerical evaluation across five delay and memory-driven PDE benchmark families, including one-step, history-space, rollout, efficiency, and ablation metrics.
\end{itemize}

The remainder of the paper is organized as follows. Section~\ref{sec:related} reviews delayed dynamics, neural operators, and history-aware prediction. Section~\ref{sec:hsfno} defines HS-FNO. Section~\ref{sec:theory} analyzes the shift-append inductive bias. Sections~\ref{sec:experiments} and~\ref{sec:results} present the numerical setup and results, and Section~\ref{sec:discussion} discusses limitations and future directions.

%% file: sections/2_RelatedWork.tex
\section{Related Work}
\label{sec:related}

\paragraph{DPDEs and numerical solvers}
DPDEs, also called partial functional differential equations, model spatially distributed systems whose evolution depends on past states. Their natural state is a history segment \(u_t\in C([-\tau,0];X)\), \(u_t(\theta)=u(t+\theta)\), rather than only the instantaneous field \(u(t)\in X\). This history-space viewpoint underlies classical semigroup and semiflow theory for delayed reaction--diffusion, wave, population, control, climate, and neutral functional evolution equations \cite{hale1993functional,wu1996partialfunctional}. It also changes the numerical problem. A solver must maintain a moving history window, evaluate delayed terms by interpolation or lookup, and couple this memory with the spatial discretization. Classical approaches include method-of-lines reductions to delay ODEs, finite-difference schemes, finite-element or Galerkin methods, spectral or collocation methods, and delay-specific time stepping \cite{sorokon2022nonlinearrde,pao2001finitedifference,wang2003asymptotic}. Stability, convergence, monotone-iteration, and asymptotic results are available for several delayed reaction--diffusion and pseudoparabolic classes \cite{wang2006timeDelayed,amirali2014explicit,lubo2022galerkin}. Spatially nonlocal delay PDEs can require specialized FDTD schemes when standard method-of-lines reductions are not directly applicable \cite{fang2019fdtd}. Our goal is not to replace these reference solvers, but to use their state-space structure to define an amortized surrogate over histories and delays.

\paragraph{Machine learning for delayed and non-Markovian dynamics}
Learning-based methods for delayed equations include neural solvers, delay-aware dynamical-system architectures, and delay-identification methods. Physics-informed neural networks and physics-informed machine learning incorporate differential-equation structure into training objectives \cite{raissi2019pinn,karniadakis2021piml}. For DPDEs, Huang and Zhu \cite{huang20225double} proposed a double-activation PINN-style network with piecewise fitting for derivative discontinuities in parabolic equations with time delay, while Wang et al. \cite{wang2026dnn} developed a PINN-DDE framework for forward and inverse DDE problems and delay-parameter estimation. Feng et al. \cite{feng2024highdim} used Fourier neural operators for high-dimensional time-delay chaotic systems by mapping one history segment to the next. Statistics-informed neural networks provide another relevant JCP example: Zhu et al. \cite{zhu2023sinn} learned stochastic Markovian and non-Markovian dynamics by matching statistical behavior motivated by projection-operator modeling. A broader neural-delay literature includes Neural DDEs with delayed adjoints, piecewise-constant delay extensions, and stabilized NDDEs for partially observed or delayed dynamics \cite{zhu2021neural,Zhu2022NeuralPCDDE,Schlaginhaufen2021stabledeep}. Recent work also learns delays rather than assuming them fixed, using adjoint sensitivity or adaptive-delay optimal-control formulations \cite{stephany2024learning,zhou2025nadde}. HS-FNO differs in its target problem: it learns a reusable spatial solution operator \(S_{\Delta t}:u_t\to u_{t+\Delta t}\) on history-field states.

\paragraph{Neural operators for PDE surrogate modeling}
Neural operators learn maps between function spaces and have become a central architecture class for PDE surrogate modeling. DeepONet uses branch--trunk networks motivated by nonlinear operator approximation, while Fourier Neural Operators (FNOs) use spectral convolution kernels for parametric PDE solution maps across discretizations \cite{lu2021deeponet,li2021fno}. Subsequent work has developed general operator-learning theory, geometry-informed operators for irregular domains, and convolutional or U-shaped neural operators \cite{kovachki2023neuraloperator,li2023gino,raonic2023cno}. Other variants include neural-field operators, transformer-based operators, latent neural operators, and physics-informed neural operators \cite{rahman2023uno,serrano2023neuralfields,cao2021galerkintransformer,li2023pdeformer,wang2024latentoperator,li2023pino}. Derivative-informed neural operators (DINO) show that adding derivative information can improve operator and Jacobian approximation in high-dimensional parametric settings, a JCP example of using problem structure to strengthen neural-operator surrogates \cite{olearyroseberry2024dino}. Benchmarks such as PDEBench have standardized evaluation across equations, parameters, and initial conditions \cite{takamoto2022pdebench}. Most time-dependent neural-operator benchmarks are Markovian: the learned map is from \(u(t,\cdot)\) to \(u(t+\Delta t,\cdot)\). DPDEs break this setup because the same current field can correspond to multiple valid futures. HS-FNO keeps the FNO backbone but changes the domain of the learned evolution from instantaneous fields to lifted history fields.

\paragraph{History-aware prediction}
The use of history for prediction has a deep mathematical foundation. Delay-coordinate embedding reconstructs state from past observations. Takens \cite{takens1981strange} established generic reconstruction from \(2m+1\) delayed scalar observations for compact \(m\)-dimensional systems, while Sauer et al. \cite{sauer1991embedology} extended this to fractal attractors using box-counting dimension. A complementary explanation comes from projection and coarse-graining. Zwanzig \cite{Zwanzig1961memory} derived exact non-Markovian kinetic equations with memory functions, and Chorin et al. \cite{chorin2000optimalpred} connected Mori--Zwanzig memory to optimal prediction for underresolved dynamics. Later work shows that memory kernels improve coarse-grained or reduced models when time-scale separation is weak \cite{li2015incorporation,gouasmi2017priori}. These viewpoints explain why lag stacks, recurrent networks, and attention models can help: LSTMs preserve long-lag information through gated memory, ConvLSTMs extend recurrence to spatiotemporal fields, and Transformers use self-attention to connect sequence positions \cite{hochreiter1997lstm,shi2015convlstm,vaswani2017attention}. For DPDEs, however, the history field \(u_t(\theta,\cdot)=u(t+\theta,\cdot)\) is not merely a feature representation. It is the state variable on which the evolution is single-valued. This distinction motivates enforcing history transport explicitly rather than asking a generic sequence model to infer it from data.

%% file: sections/3_HSFNO.tex
\section{History-Space Fourier Neural Operator}
\label{sec:hsfno}

This section defines the History-Space Fourier Neural Operator (HS-FNO). The starting point is the lifted history state
\[
    u_t(\theta,x)=u(t+\theta,x), \qquad \theta\in[-\tau,0], \quad x\in\Omega .
\]
For a spatial function space \(X\), the natural delay-PDE state space is
\[
    \mathcal H_\tau=C([-\tau,0];X),
\]
and the exact evolution over a step \(\Delta t\) is
\[
    S_{\Delta t}:\mathcal H_\tau\to\mathcal H_\tau .
\]
A generic neural operator could approximate \(S_{\Delta t}\) as an unconstrained history-to-history map. This is unnecessarily expensive for a one-step update: most of the next history window is already contained in the current window. HS-FNO exploits this structure by combining an FNO predictor for the unknown slice with an exact shift-append update for the known slices.

\paragraph{Operator decomposition}
We decompose the update into a learned future-slice predictor and deterministic history transport. Given history \(u_t\), parameters \(\mu\), delay \(\tau\), and step size \(\Delta t\), HS-FNO predicts only the newly exposed future state:
\begin{equation}
    \widehat{u}(t+\Delta t,\cdot)
    =
    P_\Theta(u_t,\mu,\tau,\Delta t).
\end{equation}
The full history update is assembled as
\begin{equation}
    G_\Theta(u_t;\mu,\tau,\Delta t)
    =
    \operatorname{ShiftAppend}
    \bigl(u_t,\widehat{u}(t+\Delta t,\cdot)\bigr).
\end{equation}
Thus, the neural operator learns the unknown future slice while the known history is transported exactly.

\input{figures/figure1}

\paragraph{Shift-append history evolution}
For \(0<\Delta t\leq \tau\), the exact update satisfies
\begin{equation}
    u_{t+\Delta t}(\theta,\cdot)
    =
    u_t(\theta+\Delta t,\cdot),
    \qquad
    \theta\in[-\tau,-\Delta t].
\end{equation}
Only the newly exposed part of the window must be predicted. On a discrete grid
\[
    \theta_j=-\tau+j\delta_\theta,\qquad j=0,\ldots,M,
\]
with \(\Delta t=m\delta_\theta\), the shift-append update is
\begin{equation}
    G_\Theta(u_t)(\theta_j,\cdot)
    =
    \begin{cases}
        u_t(\theta_{j+m},\cdot), & 0\leq j\leq M-m,\\
        \widehat{u}(t+\theta_j+\Delta t,\cdot), & M-m<j\leq M .
    \end{cases}
\end{equation}
For \(m=1\), HS-FNO predicts one future spatial slice; for \(m>1\), it predicts a short future segment or recursively applies the one-step update.

\paragraph{Fourier neural operator predictor}
The predictor \(P_\Theta\) is an FNO over the history-space domain:
\[
    P_\Theta:(u_t,\mu,\tau,\Delta t)
    \mapsto
    \widehat{u}(t+\Delta t,\cdot).
\]
For one-dimensional spatial problems, \(u_t\) is a two-dimensional field over \((\theta,x)\); for two-dimensional spatial problems, it is a three-dimensional field over \((\theta,x_1,x_2)\). Spectral convolutions over this lifted domain couple history-time and physical space while preserving the resolution-transfer structure of neural operators. Conditioning variables such as \(\mu\), \(\tau\), and \(\Delta t\) are injected as constant channels, coordinate embeddings, or modulation variables. U-Net and transformer history-space backbones are evaluated as ablations; the default HS-FNO uses the FNO backbone.

\paragraph{Training objective}
The supervised data term compares the predicted history update with the reference solver:
\begin{equation}
    \mathcal{L}_{\mathrm{data}}
    =
    \left\|
    G_\Theta(u_t;\mu,\tau,\Delta t)
    -
    S_{\Delta t}(u_t)
    \right\|_{\mathcal H_\tau}^{2}.
\end{equation}
Because the shifted part is exact, this chiefly penalizes the newly predicted slice or segment and its contribution to the assembled next history.

For long-horizon training, we optionally evaluate an autoregressive rollout loss:
\begin{equation}
    \mathcal{L}_{\mathrm{rollout}}
    =
    \sum_{k=1}^{K}
    w_k
    \left\|
    G_\Theta^{k}(u_t;\mu,\tau,\Delta t)
    -
    S_{k\Delta t}(u_t)
    \right\|_{\mathcal H_\tau}^{2},
\end{equation}
where \(w_k\geq0\) controls the emphasis on later rollout times. Prediction errors in delay PDEs are written into the future history window and can re-enter the dynamics later.

When variable step sizes are used, we also evaluate an optional semiflow-consistency regularizer:
\begin{equation}
\begin{aligned}
    \mathcal{L}_{\mathrm{semi}}
    =
    \bigl\|&G_\Theta(G_\Theta(u_t;\mu,\tau,s);\mu,\tau,r)\\
    &-G_\Theta(u_t;\mu,\tau,s+r)
    \bigr\|_{\mathcal H_\tau}^{2}.
\end{aligned}
\end{equation}
This term encourages the learned evolution to match \(S_r\circ S_s=S_{r+s}\). In the numerical study it is treated as an ablated regularizer, not as a required component of HS-FNO. The training objective is
\begin{equation}
\begin{aligned}
    \mathcal{L}
    ={}&
    \lambda_{\mathrm{data}}\mathcal{L}_{\mathrm{data}}
    +
    \lambda_{\mathrm{rollout}}\mathcal{L}_{\mathrm{rollout}}\\
    &+
    \lambda_{\mathrm{semi}}\mathcal{L}_{\mathrm{semi}} .
\end{aligned}
\end{equation}

\paragraph{Computational advantage}
An unconstrained history-to-history operator predicts the full field on \([-\tau,0]\times\Omega\), although most of it is already known. With \(M+1\) history slices and \(N_x\) spatial degrees of freedom, the naive output size is \(O(MN_x)\) in one spatial dimension. A one-step HS-FNO predicts only \(O(N_x)\) unknown values and fills the rest by exact transport. This reduces output dimension and final-layer cost by roughly \(M\), while avoiding the need to relearn the deterministic shift.

%% file: figures/figure1.tex
\begin{figure}[h]
\centering
\includegraphics[width=\columnwidth]{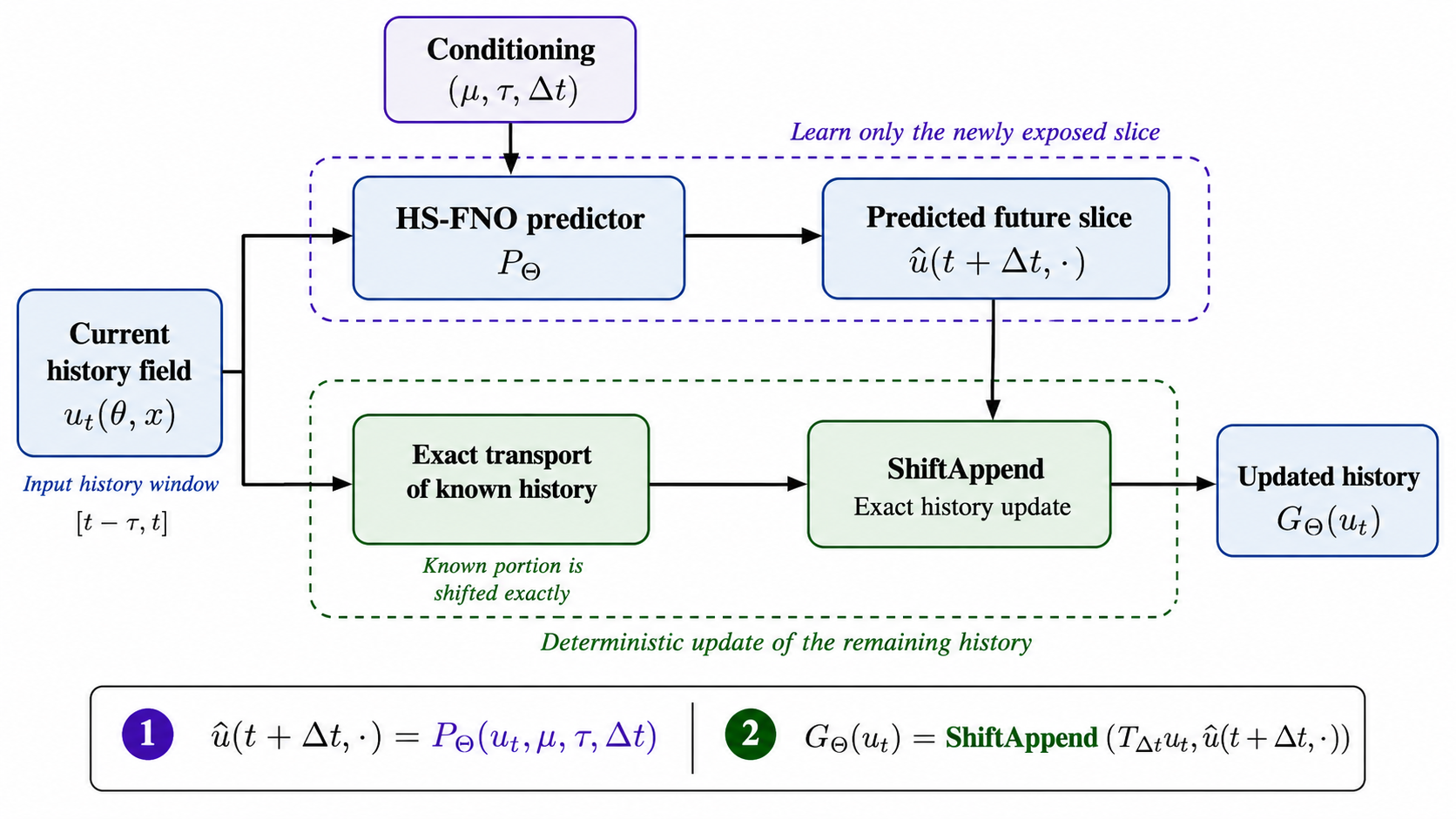}
\caption{HS-FNO architecture. The predictor \(P_\Theta\) receives the current history field \(u_t(\theta,x)\) together with conditioning variables \((\mu,\tau,\Delta t)\) and predicts the newly exposed future slice \(\widehat{u}(t+\Delta t,\cdot)\). The known portion of the history is transported exactly, and ShiftAppend combines this with the predicted slice to form the updated history \(G_\Theta(u_t)\).}
\label{fig:hsfno_architecture}
\end{figure}

%% file: sections/4_Theory.tex
\section{Inductive Bias of History-Space Shift-Append}
\label{sec:theory}

We give a simple finite-dimensional analysis of the shift-append inductive bias. The purpose is not to prove a generalization theorem for FNOs, but to isolate what the architecture removes from the learning problem. We analyze the one-grid-step case \(\Delta t=\delta_\theta\). Let
\(X_N\simeq\mathbb R^n\) with Euclidean norm and
\[
    h=(h_0,\ldots,h_M)\in\mathcal H_N:=X_N^{M+1},
    \qquad h_j\approx u(t-\tau+j\delta_\theta,\cdot).
\]
The exact discrete history update is
\begin{equation}
    S(h)=(h_1,\ldots,h_M,\Phi(h)),
    \label{eq:shiftappend_target}
\end{equation}
where \(\Phi:\mathcal H_N\to X_N\) is the exact next-slice map. Hence only the final
slice is unknown; the first \(M\) output slices are already in the input.

\begin{proposition}[Insufficient history induces irreducible error]
\label{prop:insufficient_history}
Let \(\Pi:\mathcal H_N\to\mathcal Z\) be any history representation. Suppose
\(\Pi(h)=\Pi(h')\) but \(\Phi(h)\neq\Phi(h')\). If, conditional on this common
representation, the histories are \(h\) and \(h'\) with probabilities \(p\in(0,1)\) and
\(1-p\), then every deterministic predictor \(g:\mathcal Z\to X_N\) has conditional
squared error at least
\[
    p(1-p)\|\Phi(h)-\Phi(h')\|_2^2 .
\]
\end{proposition}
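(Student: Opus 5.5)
Since $g$ sees only the representation, it is a constant on the fibre $\Pi^{-1}(z)$, where $z:=\Pi(h)=\Pi(h')$. The problem therefore reduces to a one-point estimation problem. Write $a:=\Phi(h)$, $b:=\Phi(h')$ and $c:=g(z)\in X_N$. The conditional squared error is
\[
    E(c)=p\|c-a\|_2^2+(1-p)\|c-b\|_2^2 ,
\]
and the plan is to minimize this convex quadratic in $c$ over all of $X_N\simeq\mathbb R^n$. Any deterministic $g$ gives some $c$, so $E(c)$ is at least the minimum.

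The key step is a bias--variance style expansion around the conditional mean $\bar c:=pa+(1-p)b$. Expanding both squares and using $p(a-\bar c)+(1-p)(b-\bar c)=0$, the cross terms cancel, giving
\[
    E(c)=\|c-\bar c\|_2^2+p\|a-\bar c\|_2^2+(1-p)\|b-\bar c\|_2^2 .
\]
Next I substitute $a-\bar c=(1-p)(a-b)$ and $b-\bar c=-p(a-b)$. The last two terms become $p(1-p)^2\|a-b\|_2^2+(1-p)p^2\|a-b\|_2^2=p(1-p)\|a-b\|_2^2$. Hence
\[
    E(c)\ge p(1-p)\|\Phi(h)-\Phi(h')\|_2^2 ,
\]
with equality exactly at $c=\bar c$. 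Because $p\in(0,1)$ and $\Phi(h)\neq\Phi(h')$, this bound is strictly positive, which is the "irreducible" content of the statement.

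The main obstacle is the modelling step, not the algebra. One must say precisely what "conditional squared error" means: the expectation over the two-point conditional law of the history given $\Pi=z$. One must also justify that $g\circ\Pi$ takes the same value on $h$ and $h'$, which is immediate because $g$ is deterministic and depends only on $z$.

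A remark would note that the same bound holds for the full history-space error of $S$ under any predictor built from $\Pi$. The first $M$ slices contribute nonnegative terms, so the final slice alone already forces the bound. This links the result to the current-state and lag-stack baselines, for which $\Pi$ discards part of $h$.
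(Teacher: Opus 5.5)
Your proof is correct and is essentially the paper's argument: $g\circ\Pi$ takes a single value $c$ on both histories, and the two-point risk $p\|c-a\|_2^2+(1-p)\|c-b\|_2^2$ is minimized at $c=pa+(1-p)b$, where it equals $p(1-p)\|a-b\|_2^2$. Your bias--variance expansion only makes explicit the minimization that the paper states without computation; in your closing remark, note that under the paper's weighted loss $\ell$ the final-slice bound picks up a factor $\omega_M$.
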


\begin{proof}
Any such predictor outputs one value \(z\) for both histories. With
\(y=\Phi(h)\) and \(y'=\Phi(h')\), the risk
\(p\|z-y\|_2^2+(1-p)\|z-y'\|_2^2\) is minimized at
\(z^\star=py+(1-p)y'\), giving \(p(1-p)\|y-y'\|_2^2\).
\end{proof}

Thus current-state and sparse lag-stack models can incur unavoidable conditional error
when their representation identifies histories with different next slices. For
\(Q(h)=(Q_0(h),\ldots,Q_M(h))\), define
\[
    \ell(Q;h)
    =
    \sum_{j=0}^M
    \omega_j\|Q_j(h)-[S(h)]_j\|_2^2,
    \qquad \omega_j\geq 0,
\]
and assume the displayed losses are integrable under \(H\sim\mathcal D\). For a slice
predictor \(\psi\), set
\[
    A_\psi(h)=(h_1,\ldots,h_M,\psi(h)).
\]

\begin{proposition}[Shift-append removes deterministic coordinates]
\label{prop:shift_append}
For \(H\sim\mathcal D\),
\[
\begin{aligned}
\mathbb E[\ell(Q;H)]
&=
\mathbb E\!\left[
\sum_{j=0}^{M-1}
\omega_j\|Q_j(H)-H_{j+1}\|_2^2
\right] \\
&\quad+
\mathbb E\!\left[
\omega_M\|Q_M(H)-\Phi(H)\|_2^2
\right],
\end{aligned}
\]
whereas
\[
    \mathbb E[\ell(A_\psi;H)]
    =
    \mathbb E\!\left[
    \omega_M\|\psi(H)-\Phi(H)\|_2^2
    \right].
\]
\end{proposition}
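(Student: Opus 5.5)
The argument is a direct substitution of the exact target \eqref{eq:shiftappend_target} into the definition of \(\ell\), followed by linearity of expectation. No earlier result is needed beyond the definition of \(S\). Proposition~\ref{prop:insufficient_history} plays no role here.

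\textbf{Step 1: compute \(\ell(Q;h)\) pointwise.} Fix \(h\in\mathcal H_N\). By \eqref{eq:shiftappend_target}, \([S(h)]_j=h_{j+1}\) for \(0\le j\le M-1\) and \([S(h)]_M=\Phi(h)\). Substituting into the definition of \(\ell\) and splitting the sum at \(j=M\) gives the pointwise identity
\[
\ell(Q;h)=\sum_{j=0}^{M-1}\omega_j\|Q_j(h)-h_{j+1}\|_2^2+\omega_M\|Q_M(h)-\Phi(h)\|_2^2 .
\]

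\textbf{Step 2: apply the identity to \(A_\psi\).} Take \(Q=A_\psi\), so \(Q_j(h)=h_{j+1}\) for \(j\le M-1\) and \(Q_M(h)=\psi(h)\). Every term in the first sum then vanishes identically, and
\[
\ell(A_\psi;h)=\omega_M\|\psi(h)-\Phi(h)\|_2^2
\]
holds for every \(h\).

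\textbf{Step 3: take expectations under \(H\sim\mathcal D\).} The only point needing care is splitting the expectation of a sum into a sum of expectations. Each summand is nonnegative, because the \(\omega_j\ge0\) and the terms are squared norms. Hence linearity of expectation for nonnegative measurable functions applies, with values allowed in \([0,\infty]\), and no cancellation issue can arise. The standing integrability assumption on the displayed losses then makes every term finite.

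This step is the only mild obstacle, and it is purely measure-theoretic rather than mathematical. It also needs measurability of \(Q_j\), \(\psi\) and \(\Phi\), which I would state as implicitly assumed alongside integrability. The two displayed formulas then follow immediately from Steps 1 and 2.
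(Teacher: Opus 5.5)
Your proposal is correct and follows the paper's proof: substitute $[S(h)]_j=h_{j+1}$ for $j<M$ and $[S(h)]_M=\Phi(h)$ into $\ell$, note that the first $M$ terms vanish identically for $A_\psi$, and take expectations. Your extra justification — that all summands are nonnegative, so splitting the expectation is valid in $[0,\infty]$ given measurability — is a harmless refinement that the paper leaves implicit.
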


\begin{proof}
Substitute \eqref{eq:shiftappend_target} into \(\ell\). For \(A_\psi\), all terms
\(j<M\) vanish because \((A_\psi)_j(h)=h_{j+1}\).
\end{proof}

Thus, in the one-step setting, shift-append removes the need to learn \(Mn\)
deterministic output coordinates and leaves only the \(n\)-dimensional next-slice
prediction. An unconstrained history-to-history model can learn this copying operation in
principle; HS-FNO enforces it by construction.

\begin{lemma}[Rollout error under shift-append]
\label{lem:rollout_error}
Let \(h^{k+1}=S(h^k)\), \(\widehat h^{k+1}=A_\psi(\widehat h^k)\), and
\(\widehat h^0=h^0\). Suppose \(\Phi\) is \(L\)-Lipschitz on a set containing the exact
and predicted rollout histories, in the norm
\[
    \|h-\tilde h\|_{\max}:=\max_j\|h_j-\tilde h_j\|_2 .
\]
Assume the slice predictor has uniform on-rollout residual error
\[
    \|\psi(\widehat h^k)-\Phi(\widehat h^k)\|_2\leq\varepsilon .
\]
Then, for \(a_k=\|\widehat h_M^k-h_M^k\|_2\),
\[
    a_{k+1}
    \leq
    \varepsilon
    +
    L\max_{\max\{0,k-M\}\leq r\leq k} a_r .
\]
\end{lemma}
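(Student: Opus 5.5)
The plan is to unroll both recursions and show that every slice of the predicted history is either an exact copy of an earlier slice or a freshly predicted one. From the definitions of \(S\) in \eqref{eq:shiftappend_target} and of \(A_\psi\), the new last slice is
\[
h^{k+1}_M=\Phi(h^k),\qquad \widehat h^{k+1}_M=\psi(\widehat h^k),
\]
and slice \(j<M\) of the updated history is slice \(j+1\) of the previous one.

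\emph{Step 1 (indexing identity).} By induction on \(k\), I will show that for every \(j\in\{0,\ldots,M\}\)
\[
h^k_j = h^{k-(M-j)}_M \quad\text{whenever } k-(M-j)\ge 0,
\]
and likewise for \(\widehat h^k\). When \(k-(M-j)<0\), slice \(j\) has not yet been overwritten. It therefore equals slice \(j+k\) of the initial history, which is the same for both rollouts because \(\widehat h^0=h^0\). Consequently
\[
\|\widehat h^k_j-h^k_j\|_2=a_{k-(M-j)} \quad\text{if } k-(M-j)\ge 0, \qquad \text{and } 0 \text{ otherwise.}
\]
In particular \(a_0=0\). Taking the maximum over \(j\) gives
\[
\|\widehat h^k-h^k\|_{\max}\le \max_{\max\{0,k-M\}\le r\le k} a_r .
\]
Here the index \(r=k-M+j\) ranges over \([k-M,k]\cap[0,\infty)\), and the entries that were never overwritten contribute \(0\), which the maximum of the nonnegative \(a_r\) already dominates.

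\emph{Step 2 (triangle inequality).} Split the error of the new slice as
\[
a_{k+1}=\|\psi(\widehat h^k)-\Phi(h^k)\|_2\le \|\psi(\widehat h^k)-\Phi(\widehat h^k)\|_2+\|\Phi(\widehat h^k)-\Phi(h^k)\|_2 .
\]
The first term is at most \(\varepsilon\) by the residual assumption. The second is at most \(L\|\widehat h^k-h^k\|_{\max}\) by the Lipschitz hypothesis, which applies because both histories lie in the assumed set. Inserting the bound from Step 1 yields the claim.

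The main obstacle is Step 1: I expect the only delicate part to be the index bookkeeping, in particular correctly handling the not-yet-overwritten initial slices when \(k<M\) and checking that the maximum runs exactly over \(\max\{0,k-M\}\le r\le k\). Everything else is a direct triangle-inequality argument.
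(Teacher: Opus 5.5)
Your proposal is correct and follows essentially the same route as the paper: you split \(a_{k+1}\) by the triangle inequality into the on-rollout residual (at most \(\varepsilon\)) and the Lipschitz term \(L\|\widehat h^k-h^k\|_{\max}\). You then use the shift structure and \(\widehat h^0=h^0\) to show that each slice of \(\widehat h^k-h^k\) is either zero or a transported final-slice error \(a_r\) with \(\max\{0,k-M\}\le r\le k\); your explicit identity \(h^k_j=h^{k-(M-j)}_M\) is just a more careful version of the paper's bookkeeping \(e^{k+1}_j=e^k_{j+1}\) for \(j<M\).
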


\begin{proof}
Let \(e_j^k=\|\widehat h_j^k-h_j^k\|_2\). Shift-append gives
\(e_j^{k+1}=e_{j+1}^k\) for \(j<M\), so new error is injected only in the final slice.
For that slice,
\[
\begin{aligned}
a_{k+1}
&=
\|\psi(\widehat h^k)-\Phi(h^k)\|_2  \\
&\leq
\|\psi(\widehat h^k)-\Phi(\widehat h^k)\|_2
+
\|\Phi(\widehat h^k)-\Phi(h^k)\|_2 \\
&\leq
\varepsilon+
L\|\widehat h^k-h^k\|_{\max}.
\end{aligned}
\]
Since \(\widehat h^0=h^0\), every slice error in \(\widehat h^k-h^k\) is either zero
from the initial history or a transported current-slice error from one of the previous
\(M\) steps. Hence
\[
    \|\widehat h^k-h^k\|_{\max}
    \leq
    \max_{\max\{0,k-M\}\leq r\leq k}a_r .
\]
Substituting gives the recurrence.
\end{proof}

Lemma~\ref{lem:rollout_error} shows that fresh error enters only through newly predicted
slices. Existing slice errors may still remain in the history window and affect
later predictions through \(\Phi\), so shift-append is not a cure for inaccurate next-slice prediction. Its role is narrower and more concrete: it prevents the model from spending capacity on deterministic copying and confines new errors to the learned part of the update. If \(\psi=\Phi\), then \(A_\psi=S\), so the exact
discrete one-step history update is recovered and its iterates satisfy the discrete
semigroup identity \(S^{k+\ell}=S^k\circ S^\ell\).

%% file: sections/5_Experiments.tex
\section{Experiments}
\label{sec:experiments}

We evaluate HS-FNO on five delay-PDE benchmark families covering delayed local reaction, delayed transmission, nonlocal delayed coupling, delayed wave feedback, and distributed memory. The experiments are designed to test three points: whether the history state improves prediction relative to the instantaneous field, whether exact shift-append transport improves over unconstrained history prediction, and whether these gains persist under rollout, delay shifts, parameter shifts, and resolution transfer.

\subsection{Benchmark Problems}

All benchmarks are generated on a bounded domain \(\Omega\subset\mathbb R^d\), with \(d=1\) for main experiments and \(d=2\) for resolution-transfer tests. Each trajectory starts from an initial history
\[
    \phi(\theta,x)=u(\theta,x), \qquad \theta\in[-\tau,0],
\]
and the learning task is to approximate \(S_{\Delta t}:u_t\mapsto u_{t+\Delta t}\). Parameters, delays, and initial histories are sampled independently per trajectory, and all methods use the same trajectories.

\paragraph{Delayed reaction--diffusion}
We use
\begin{equation}
\begin{aligned}
    \partial_t u(t,x)
    &=
    D\Delta u(t,x)
    +
    r u(t,x)\left(1-u(t-\tau,x)\right), \\
    x&\in\Omega .
\end{aligned}
\label{eq:delayed_rd}
\end{equation}
Here \(u\) is a spatial density, \(D\) is diffusion, \(r\) is growth, and \(\tau\) is delayed population feedback. This abstracts maturation, resource depletion, delayed competition, and diffusive Nicholson-type dynamics \cite{yi2009threshold,wu1996partialfunctional,sorokon2022nonlinearrde}. Initial histories are nonnegative, and negative numerical reference values are clipped only when caused by solver discretization error.

\paragraph{Spatial epidemiology}
For delayed infectiousness or observation lag, we use
\begin{equation}
    \partial_t I(t,x)
    =
    D\Delta I(t,x)
    +
    \beta S(x)I(t-\tau,x)
    -
    \gamma I(t,x).
\label{eq:delayed_epidemic}
\end{equation}
Here \(I\) is infected density, \(S(x)\) is a fixed susceptibility field, \(\beta\) is transmission, and \(\gamma\) is recovery or removal. The susceptibility field is sampled once per trajectory and provided as an input channel.

\paragraph{Nonlocal neural field}
For nonlocal delayed coupling, we consider
\begin{equation}
    \partial_t u(t,x)
    =
    -u(t,x)
    +
    \int_\Omega
    w(x,y)\,
    \sigma\bigl(u(t-\tau(x,y),y)\bigr)
    \,dy .
\label{eq:neural_field}
\end{equation}
The kernel \(w(x,y)\) is smooth and distance-based, \(\sigma\) is nonlinear, and \(\tau(x,y)\) is a positive bounded function of spatial separation. The integral is evaluated by quadrature on the simulation grid.

\paragraph{Delayed wave dynamics}
We include
\begin{equation}
    \partial_{tt}u(t,x)
    =
    c^2\Delta u(t,x)
    +
    F\bigl(u(t-\tau,x)\bigr).
\label{eq:delayed_wave}
\end{equation}
We rewrite this as
\[
    \partial_t u=v,\qquad
    \partial_t v=c^2\Delta u+F(u(t-\tau,x)),
\]
and define the history state over \(z_t=(u_t,v_t)\). This benchmark tests phase accuracy, amplitude preservation, and long-horizon stability under delayed feedback, and is motivated by optical and wave systems requiring explicit delayed history access \cite{fang2019fdtd}.

\paragraph{Distributed-memory dynamics}
Finally, we consider
\begin{equation}
    \partial_t u(t)
    =
    \mathcal{N}(u(t))
    +
    \int_{-\tau}^{0}
    K(\theta)\,
    \mathcal{M}\bigl(u(t+\theta)\bigr)
    \,d\theta .
\label{eq:distributed_memory}
\end{equation}
We instantiate
\[
    \mathcal{N}(u)=\nu\Delta u+f(u),
    \qquad
    \mathcal{M}(u)=a_1u+a_2u^3,
\]
with normalized nonnegative memory kernel \(K\). The memory integral is computed on the model history grid. This generalizes single-lag delay equations and is motivated by Mori--Zwanzig and LES memory closures \cite{Zwanzig1961memory,chorin2000optimalpred,parish2017nonmarkov,parish2017dynamic,gouasmi2017priori}.

\subsection{Data Generation}

Datasets are generated offline with deterministic reference solvers so that all methods are compared against the same numerical trajectories. Local reaction--diffusion, spatial epidemiology, and distributed-memory equations use method-of-steps integration with delayed terms evaluated from a history buffer. The current PDE state is advanced by explicit or semi-implicit time integration. Diffusion uses centered finite differences for Dirichlet or Neumann boundaries and Fourier pseudospectral derivatives for periodic domains. The neural-field integral uses matrix-vector quadrature, and the wave equation is evolved in first-order form. Off-grid delayed times are evaluated by linear interpolation in the reference history.

Each trajectory is specified by \((\phi,\mu,\tau,\mathcal B)\), where \(\phi\) is the initial history, \(\mu\) denotes physical parameters, \(\tau\) is a scalar delay or delay field, and \(\mathcal B\) denotes boundary conditions. Initial histories are smooth random fields using low-frequency Fourier modes or boundary-compatible expansions; density-like fields are shifted and rescaled to be nonnegative. Multi-channel systems use independent channel histories unless constrained by the equation.

Fields are saved at \(t_n=n\Delta t_{\mathrm{save}}\). For delay horizon \(\tau\), the discrete history is
\[
    u_{t_n}
    =
    \{u(t_n+\theta_j,\cdot)\}_{j=0}^{M},
    \qquad
    \theta_j=-\tau+j\delta_\theta,
    \qquad
    \delta_\theta=\tau/M .
\]
Each supervised pair is
\begin{equation}
    (u_{t_n},\mu,\tau,\Delta t)
    \mapsto
    u_{t_n+\Delta t}.
\label{eq:supervised_pair}
\end{equation}
HS-FNO targets only the newly exposed future slice or segment; the full next history is used for history-space losses and history-to-history comparisons. Train, validation, and test sets are split by trajectory. We evaluate in-distribution performance, held-out delays, held-out parameter ranges, and spatial-resolution transfer.

\subsection{Models and Baselines}

All models use the same supervised windows and trajectory splits. The current-state neural operator receives only \(u(t,\cdot)\) and predicts \(u(t+\Delta t,\cdot)\). The lag-stack baseline receives
\[
    [u(t,\cdot),u(t-\delta,\cdot),\ldots,u(t-L\delta,\cdot)]
    \mapsto u(t+\Delta t,\cdot),
\]
with lags sampled from the same history window as HS-FNO, but without enforcing history-state evolution. The unconstrained history-to-history baseline receives \(u_t\) and directly predicts \(u_{t+\Delta t}\) using the same history grid and conditioning variables as HS-FNO. Sequence baselines include ConvLSTM, temporal U-Net, and transformer-over-history models.

The default model is HS-FNO: an FNO over the history-space domain that predicts only the newly exposed future slice and advances the window by exact shift-append transport. We also evaluate HS-FNO variants that remove shift-append, remove explicit delay conditioning, use alternative coordinate or FiLM conditioning, change the history length or history resolution, or add rollout and semiflow-consistency losses. To separate the effect of the FNO backbone from the history-space formulation, we additionally test history-space U-Net and transformer variants. When \(\mu\), \(\tau\), or \(\Delta t\) vary, they are supplied to applicable models as conditioning channels or embeddings.

\subsection{Training and Evaluation}

All models are trained on the same supervised windows and evaluated on the same train/validation/test trajectory splits. In one-step training, each model predicts \(t+\Delta t\); in rollout evaluation, predictions are fed back autoregressively for \(K\) steps. HS-FNO and history-to-history models roll out the predicted history window, while current-state and lag-stack baselines update their input buffers by appending predicted instantaneous fields. The loss is
\[
    \mathcal{L}
    =
    \lambda_{\mathrm{data}}\mathcal{L}_{\mathrm{data}}
    +
    \lambda_{\mathrm{rollout}}\mathcal{L}_{\mathrm{rollout}}
    +
    \lambda_{\mathrm{semi}}\mathcal{L}_{\mathrm{semi}},
\]
with terms defined in the HS-FNO method section. Models without variable-step prediction omit \(\mathcal{L}_{\mathrm{semi}}\).

We report one-step relative error,
\[
    E_{\mathrm{one}}
    =
    \frac{\|\widehat{u}(t+\Delta t)-u(t+\Delta t)\|_2}
         {\|u(t+\Delta t)\|_2},
\]
history-space error,
\[
    E_{\mathrm{hist}}
    =
    \frac{\|\widehat{u}_{t+\Delta t}-u_{t+\Delta t}\|_{\mathcal H_\tau}}
         {\|u_{t+\Delta t}\|_{\mathcal H_\tau}},
\]
and \(K\)-step rollout error,
\[
    E_{\mathrm{roll}}
    =
    \frac{1}{K}\sum_{k=1}^{K}
    \frac{\|\widehat{u}(t+k\Delta t)-u(t+k\Delta t)\|_2}
         {\|u(t+k\Delta t)\|_2}.
\]
For variable-step experiments, we also compute
\[
    E_{\mathrm{semi}}
    =
    \frac{
    \|G_\Theta(G_\Theta(u_t;s);r)-G_\Theta(u_t;s+r)\|_{\mathcal H_\tau}
    }{
    \|G_\Theta(u_t;s+r)\|_{\mathcal H_\tau}
    }.
\]
Efficiency metrics include parameter count, inference time per step, output dimension, peak memory use, and speedup over the reference solver. Main results are averaged over ten independent seeds. We report seed-level means with 95\% bootstrap confidence intervals and use paired comparisons over matched seed--benchmark--regime cells when comparing HS-FNO against baselines. Benchmark--regime cells are treated as structured evaluation conditions rather than independent random trials. Application-specific diagnostics, such as amplitude, frequency, phase, and normalized physics metrics, are computed for analysis. They are not used as headline evidence when a diagnostic is poorly scaled or degenerate for a particular benchmark.

\subsection{Ablations}

We ablate exact shift-append transport, rollout and semiflow losses, input history length \(\tau_{\mathrm{input}}\), history resolution \(M\), explicit delay conditioning versus separate per-delay models, and backbone choice among FNO, convolutional/U-shaped, and transformer history-space models.

%% file: sections/6_Results.tex
\section{Results and Analysis}
\label{sec:results}

We evaluate all models on five benchmark families and four regimes: in-distribution, held-out delay, held-out parameter, and resolution transfer. DPDE results are averaged over ten seeds; confidence intervals are 95\% bootstrap percentile intervals over seed-level aggregate means. HS-FNO denotes the default history-space shift-append model with an FNO backbone.

\paragraph{Aggregate accuracy}
Figure~\ref{fig:main_results} summarizes the main aggregate comparison across one-step, history-space, and rollout metrics. HS-FNO is the best principal model on all three aggregate errors. The one-step error decreases from \(0.143\) for current-state prediction, \(0.114\) for lag-stack prediction, and \(0.113\) for History2History to \(0.066\) for HS-FNO. The history-space error is also lowest for HS-FNO, with error \(0.098\) compared with \(0.114\), \(0.123\), and \(0.126\) for lag-stack, current-state, and History2History models, respectively.

\input{figures/figure2}

The largest gains appear under autoregressive rollout, where errors from an incomplete or poorly structured state representation accumulate over repeated prediction. HS-FNO reduces aggregate rollout error from \(0.241\), \(0.188\), and \(0.185\) for current-state, lag-stack, and History2History models, respectively, to \(0.094\). These correspond to paired improvements of about \(50.8\%\), \(38.0\%\), and \(39.9\%\). The comparison with History2History is especially important: both models receive the full history window, but HS-FNO only learns the newly exposed future slice and transports the known history exactly. Thus the improvement is not only due to using history, but also to imposing the correct update structure on the history state.

\paragraph{Rollout stability and regime behavior}
Figure~\ref{fig:rollout_growth} shows how errors grow over autoregressive rollout steps. HS-FNO has the lowest mean error at each evaluated rollout step and reaches final-step error \(0.122\), compared with \(0.256\) for History2History, \(0.262\) for lag-stack, and \(0.336\) for current-state prediction. The separation between HS-FNO and the baselines widens with rollout horizon, which is consistent with the central motivation of the method: in non-Markovian dynamics, errors are written back into the history window and can influence later predictions.

\input{figures/figure3}

The rollout curves also clarify the difference between using additional past fields as input and enforcing history-space evolution. Lag-stack and History2History improve substantially over the current-state baseline, indicating that access to past states matters. However, both remain well above HS-FNO during rollout. This suggests that treating history as extra channels or directly predicting the full next history does not fully exploit the deterministic shift structure of the delay state. HS-FNO is more stable because only the newly exposed slice is learned, while the rest of the history update is fixed by construction.

The improvement is strongest in aggregate but is not uniform across regimes. HS-FNO variants win most cells, though the best variant changes by regime; the no-delay-conditioning variant slightly improves aggregate rollout error to \(0.090\). The clearest failure mode is resolution transfer, especially delayed reaction--diffusion, where the best HS-FNO-family variant is about \(0.352\) and the default is about \(0.394\). This indicates that the history-space inductive bias improves rollout reliability, but does not by itself solve cross-resolution generalization.

\begin{table}[H]
\centering
\small
\setlength{\tabcolsep}{3.2pt}
\begin{tabular}{lcccc}
\hline
Model & Rollout & Time & Params & Memory \\
\hline
HS-FNO no-delay & \textbf{0.090} & 0.0016 & \(1.23{\times}10^5\) & 20.4 \\
HS-FNO & 0.094 & 0.0018 & \(2.19{\times}10^5\) & 22.0 \\
History2History & 0.185 & 0.0018 & \(6.09{\times}10^6\) & 78.5 \\
Lag-stack & 0.188 & 0.0014 & \(1.05{\times}10^6\) & 27.6 \\
Current-state & 0.241 & \textbf{0.0013} & \(6.72{\times}10^5\) & 24.1 \\
Temporal U-Net & 0.352 & 0.0026 & \(1.61{\times}10^6\) & 33.0 \\
ConvLSTM & 0.383 & 0.0028 & \(1.54{\times}10^6\) & 32.5 \\
Transformer & 0.459 & 0.0020 & \(\mathbf{7.69{\times}10^4}\) & \textbf{19.4} \\
\hline
\end{tabular}
\caption{Accuracy--efficiency comparison. Time is inference time per step in seconds; memory is peak memory in MB. Best value in each column is bolded}
\label{tab:efficiency_results}
\end{table}

\paragraph{Efficiency and ablations}
Table~\ref{tab:efficiency_results} shows that the default HS-FNO gives the best rollout accuracy while remaining efficient. It obtains rollout error \(0.094\) with \(2.19{\times}10^5\) parameters, whereas History2History uses \(6.09{\times}10^6\) parameters and obtains rollout error \(0.185\). Thus, directly predicting the full history is not only less structured but also more expensive. Among the compared models, HS-FNO has the second-fewest parameters and the second-smallest memory footprint. The only model with fewer parameters is the transformer baseline, but its rollout error is much larger at \(0.459\). The current-state model is slightly faster per step, but its rollout error is more than twice that of HS-FNO.

Ablations support the main design. Removing shift-append raises rollout error from \(0.094\) to \(0.150\), showing that exact history transport is not just a cosmetic architectural choice. History-space U-Net and transformer variants are worse at \(0.162\) and \(0.228\), respectively, which suggests that the FNO backbone is a strong match for the lifted history-space representation. The rollout--semiflow objective degrades performance to \(0.475\), so it is not used as headline evidence.

\begin{figure}[H]
\centering
\begin{tikzpicture}
\begin{axis}[
    width=\columnwidth,
    height=5.0cm,
    ybar,
    bar width=7pt,
    ymin=0,
    ymax=4.2,
    ylabel={Average MAE},
    symbolic x coords={HS-FNO,Current,Persistence,H2H,Lag},
    xtick=data,
    xticklabel style={font=\scriptsize, rotate=20, anchor=east},
    yticklabel style={font=\scriptsize},
    ylabel style={font=\small},
    enlarge x limits=0.18,
    grid=major,
    grid style={gray!20},
    legend style={
        font=\scriptsize,
        at={(0.5,-0.23)},
        anchor=north,
        draw=none,
        fill=none
    },
]
\addplot coordinates {(HS-FNO,3.359) (Current,3.713) (Persistence,3.804) (H2H,3.858) (Lag,3.862)};
\addlegendentry{METR-LA}

\addplot coordinates {(HS-FNO,1.818) (Current,2.251) (Persistence,2.178) (H2H,2.338) (Lag,2.336)};
\addlegendentry{PEMS-BAY}
\end{axis}
\end{tikzpicture}
\caption{Real-world traffic sanity check under the standard 12-input/12-output protocol. Bars show denormalized average MAE on METR-LA and PEMS-BAY; lower is better}
\label{fig:real_traffic}
\end{figure}

\paragraph{Real-world spatiotemporal sanity check}
Figure~\ref{fig:real_traffic} evaluates the same history-space idea on METR-LA and PEMS-BAY traffic forecasting \cite{li2018dcrnn}. These are not ground-truth DPDE datasets, and the result should not be interpreted as validation of a specific delay-PDE model for traffic. They provide a real spatiotemporal sanity check for the history-space inductive bias. Under the standard 12-input/12-output protocol, HS-FNO attains the lowest average MAE among the tested baselines on both datasets. This supports the broader claim that representing a recent history window as the predictive state can improve forecasting outside controlled synthetic PDE simulations, while still leaving open whether the underlying system is governed by an explicit delay PDE.

\paragraph{Summary}
Overall, the results support three conclusions. First, history-space prediction improves non-Markovian surrogate modeling relative to current-state, lag-stack, and sequence baselines. Second, exact shift-append transport is beneficial because it avoids relearning deterministic history movement and focuses model capacity on the unknown future slice. Third, HS-FNO provides the strongest aggregate accuracy--efficiency tradeoff among the tested models. These conclusions are strongest for aggregate rollout performance, while resolution transfer remains the main DPDE limitation.

%% file: figures/figure2.tex
\begin{figure*}[h]
\centering
\begin{tikzpicture}
\pgfplotstableread[row sep=\\]{
model mean errplus errminus\\
HS-FNO 0.066 0.009 0.009\\
H2H 0.113 0.012 0.012\\
Lag 0.114 0.014 0.013\\
Current 0.143 0.014 0.014\\
U-Net 0.202 0.036 0.035\\
ConvLSTM 0.238 0.031 0.033\\
Trans. 0.324 0.039 0.039\\
}\onestepdata

\pgfplotstableread[row sep=\\]{
model mean errplus errminus\\
HS-FNO 0.098 0.007 0.006\\
H2H 0.126 0.013 0.012\\
Lag 0.114 0.007 0.007\\
Current 0.123 0.007 0.006\\
U-Net 0.144 0.014 0.013\\
ConvLSTM 0.159 0.012 0.013\\
Trans. 0.189 0.017 0.016\\
}\historydata

\pgfplotstableread[row sep=\\]{
model mean errplus errminus\\
HS-FNO 0.094 0.011 0.009\\
H2H 0.185 0.020 0.020\\
Lag 0.188 0.025 0.024\\
Current 0.241 0.023 0.024\\
U-Net 0.352 0.072 0.068\\
ConvLSTM 0.383 0.055 0.057\\
Trans. 0.459 0.052 0.049\\
}\rolloutdata

\begin{axis}[
    width=0.88\textwidth,
    height=5.7cm,
    scale only axis,
    ybar,
    bar width=4.2pt,
    ymin=0,
    ymax=0.56,
    ylabel={Relative error},
    ylabel style={font=\small},
    yticklabel style={font=\scriptsize},
    symbolic x coords={HS-FNO,H2H,Lag,Current,U-Net,ConvLSTM,Trans.},
    xtick=data,
    xticklabel style={font=\scriptsize, rotate=28, anchor=east},
    enlarge x limits=0.12,
    grid=major,
    grid style={gray!20},
    legend columns=3,
    legend style={
        font=\scriptsize,
        at={(0.5,-0.18)},
        anchor=north,
        draw=none,
        fill=none
    },
    error bars/y dir=both,
    error bars/y explicit,
]

\addplot+[
    draw=blue!80!black,
    fill=blue!30,
    error bars/error bar style={black},
]
table[x=model,y=mean,y error plus=errplus,y error minus=errminus] {\onestepdata};
\addlegendentry{One-step}

\addplot+[
    draw=teal!70!black,
    fill=teal!35,
    error bars/error bar style={black},
]
table[x=model,y=mean,y error plus=errplus,y error minus=errminus] {\historydata};
\addlegendentry{History}

\addplot+[
    draw=red!75!black,
    fill=red!35,
    error bars/error bar style={black},
]
table[x=model,y=mean,y error plus=errplus,y error minus=errminus] {\rolloutdata};
\addlegendentry{Rollout}

\end{axis}
\end{tikzpicture}
\caption{Aggregate relative errors across benchmark--regime cells. Bars show ten-seed means and error bars show 95\% bootstrap confidence intervals. Lower is better.}
\label{fig:main_results}
\end{figure*}

%% file: figures/figure3.tex
\begin{figure*}[h]
\centering
\begin{tikzpicture}
\begin{axis}[
    width=0.88\textwidth,
    height=5.7cm,
    scale only axis,
    xmin=0, xmax=3,
    ymin=0, ymax=0.65,
    xtick={1,2,3},
    xlabel={Rollout step},
    ylabel={Mean relative error},
    xlabel style={font=\small},
    ylabel style={font=\small},
    ticklabel style={font=\scriptsize},
    grid=major,
    grid style={gray!20},
    legend columns=4,
    legend style={
        font=\scriptsize,
        at={(0.5,-0.18)},
        anchor=north,
        draw=none,
        fill=none,
        /tikz/every even column/.append style={column sep=0.3cm}
    },
    line cap=round,
    line join=round,
]
\addplot[very thick, blue, mark=*, mark size=2.6pt, mark options={fill=blue}]
coordinates {(0,0) (1,0.0655) (2,0.0936) (3,0.1218)};
\addlegendentry{HS-FNO}

\addplot[thick, red!85!black, mark=square*, mark size=2.4pt]
coordinates {(0,0) (1,0.1129) (2,0.1843) (3,0.2557)};
\addlegendentry{H2H}

\addplot[semithick, brown!70!black, dashed, mark=triangle*, mark size=2.4pt]
coordinates {(0,0) (1,0.1142) (2,0.1882) (3,0.2622)};
\addlegendentry{Lag}

\addplot[semithick, black, mark=diamond*, mark size=2.2pt]
coordinates {(0,0) (1,0.1433) (2,0.2398) (3,0.3364)};
\addlegendentry{Current}

\addplot[semithick, gray!70!black, mark=o, mark size=2.2pt]
coordinates {(0,0) (1,0.2017) (2,0.3529) (3,0.5041)};
\addlegendentry{U-Net}

\addplot[semithick, red!60, dashed, mark=x, mark size=2.4pt]
coordinates {(0,0) (1,0.2383) (2,0.3819) (3,0.5255)};
\addlegendentry{ConvLSTM}

\addplot[semithick, brown!55, dashed, mark=+, mark size=3.0pt]
coordinates {(0,0) (1,0.3236) (2,0.4497) (3,0.5758)};
\addlegendentry{Trans.}

\node[anchor=west,font=\scriptsize,blue!80!black] at (axis cs:3,0.1218) {$\leftarrow$ best final-step error};

\end{axis}
\end{tikzpicture}
\caption{Mean rollout error over autoregressive prediction steps. HS-FNO has the lowest error at every rollout step among the main models.}
\label{fig:rollout_growth}
\end{figure*}

%% file: sections/7_Discussion.tex
\section{Discussion and Limitations}
\label{sec:discussion}

\paragraph{Interpretation}
History-space operators are most appropriate when \(u(t,\cdot)\) is not a sufficient state, as in systems with explicit delays, distributed memory, unresolved variables, or observation/control lags. In these settings, current-state surrogates can be structurally misspecified because \(u(t,\cdot)\mapsto u(t+\Delta t,\cdot)\) may not be single-valued. HS-FNO addresses this issue by treating the history window as the state and enforcing deterministic shift-append transport. The numerical results support this design most strongly for aggregate rollout accuracy and for the accuracy--efficiency tradeoff against unconstrained history-to-history prediction.

\paragraph{What the method does not solve}
Shift-append transport does not make the next-slice predictor exact, and it does not remove the usual sources of neural-surrogate error. It only prevents the model from relearning deterministic movement of already observed history slices. If the learned future-slice map is inaccurate, its errors are still written into the history window and can influence later predictions. This is why rollout evaluation is essential for this problem class.

\paragraph{Limitations}
HS-FNO increases input size with the number of history slices and depends on the chosen history grid and delay horizon. If the grid undersamples the relevant memory scale, the lifted state remains incomplete even though shift-append is exact on that grid. State-dependent delays, spatially varying delays, and distributed-memory kernels may require interpolation or quadrature, weakening the clean one-slice update. Empirically, performance is not uniform: explicit delay conditioning is not consistently beneficial, rollout--semiflow training degrades performance, and resolution transfer remains the clearest failure mode. The main DPDE benchmarks are controlled synthetic systems; METR-LA and PEMS-BAY provide real spatiotemporal sanity checks, but they are not DPDE datasets.

\paragraph{Future work}
Future work should study latent history compression, adaptive history grids, learned memory quadrature, uncertainty-aware rollouts, stronger resolution transfer, and extensions to inverse problems, delay identification, parameter estimation, and control. Testing on real delayed spatial datasets remains important, since observed histories may be noisy, partial, irregularly sampled, or governed by unknown memory mechanisms.

%% file: sections/8_Conclusion.tex
\section{Conclusion}

This paper presented HS-FNO, a history-space neural operator for non-Markovian PDE surrogate modeling. The central observation is that delay and memory-driven dynamics are not generally single-valued on the instantaneous field \(u(t,\cdot)\). Their state is the lifted history \(u_t(\theta,x)=u(t+\theta,x)\). HS-FNO uses this representation directly, predicts only the newly exposed future slice, and advances the remaining history by exact shift-append transport.

Across five delay and memory-driven PDE families, HS-FNO improves aggregate one-step, history-space, and rollout accuracy over current-state, lag-stack, unconstrained history-to-history, and sequence baselines. The largest gains occur in autoregressive rollout, where errors from misspecified state representations accumulate over time. The efficiency results show that exact shift-append transport is also computationally useful: the model avoids predicting history coordinates that are already known from the previous window.

The results support history-space operator learning as a practical framework for non-Markovian PDE surrogates. They also identify the main remaining issues. Resolution transfer is still the weakest regime, delay conditioning is not uniformly beneficial, and real delayed spatial datasets remain limited. Addressing these points will require adaptive memory representations, stronger cross-resolution training, and better treatment of uncertain or partially observed histories.